\def\ie{{\em i.e.}}
\def\eg{{\em e.g.}}
\def\etal{{\em et al.}}
\newcommand{\figref}[1]{Fig. \ref{#1}}
\newcommand{\tabref}[1]{Tab. \ref{#1}}
\newcommand{\secref}[1]{Section \ref{#1}}
\newcommand{\mc}[1]{\mathcal{#1}}
\newcommand{\tabincell}[2]{\begin{tabular}{@{}#1@{}}#2\end{tabular}}
\newcommand{\br}[1]{\bm{\mathrm{#1}}}
\newcommand{\bs}[1]{\boldsymbol{\texttt{#1}}}
\newtheorem{proof}{Proof}[section]
\begin{document}
%
\title{Part-guided Relational Transformers for Fine-grained Visual Recognition}

\author{Yifan~Zhao,~Jia~Li,~\IEEEmembership{Senior Member,~IEEE}, Xiaowu~Chen,~\IEEEmembership{Senior Member,~IEEE}, \\ and~Yonghong~Tian,~\IEEEmembership{Senior Member,~IEEE}
\IEEEcompsocitemizethanks{\IEEEcompsocthanksitem Y. Zhao, J. Li, and X. Chen are with the State Key Laboratory of Virtual Reality Technology and Systems, School of Computer Science and Engineering, Beihang University, Beijing, 100191, China.
\IEEEcompsocthanksitem Y. Tian is with the Department of Computer Science and Technology, Peking University, Beijing, 100871, China.
\IEEEcompsocthanksitem J. Li, X. Chen and Y. Tian are also with the Pengcheng Laboratory, Shenzhen, 518055, China.
\IEEEcompsocthanksitem J. Li is the corresponding author (E-mail: jiali@buaa.edu.cn).
}}

\markboth{ IEEE TRANSACTIONS ON IMAGE PROCESSING}%
{Zhao \MakeLowercase{\textit{et al.}}: IEEE TRANSACTIONS ON IMAGE PROCESSING}

\maketitle

\begin{abstract}
Fine-grained visual recognition is to classify objects with visually similar appearances into subcategories, which has made great progress with the development of deep CNNs. However, handling subtle differences between different subcategories still remains a challenge. In this paper, we propose to solve this issue in one unified framework from two aspects,~\ie, constructing feature-level interrelationships, and capturing part-level discriminative features. This framework, namely PArt-guided Relational Transformers (PART), is proposed to learn the discriminative part features with an automatic part discovery module, and to explore the intrinsic correlations with a feature transformation module by adapting the Transformer models from the field of natural language processing. The part discovery module efficiently discovers the discriminative regions which are highly-corresponded to the gradient descent procedure. Then the second feature transformation module builds correlations within the global embedding and multiple part embedding, enhancing spatial interactions among semantic pixels.
Moreover, our proposed approach does not rely on additional part branches in the inference time and reaches state-of-the-art performance on 3 widely-used fine-grained object recognition benchmarks. Experimental results and explainable visualizations demonstrate the effectiveness of our proposed approach. Code can be found at {\color{blue}\url{https://github.com/iCVTEAM/PART}}.

\end{abstract}

\begin{IEEEkeywords}
Fine-grained visual recognition, Transformers, Part discovery, Relationship.
\end{IEEEkeywords}

%
\IEEEpeerreviewmaketitle

\section{Introduction}\label{sec:intro}

\IEEEPARstart{F}{ine}-grained visual recognition aims to classify and distinguish the subtle differences among sub-categories with similar appearances, which has made great progress in recognizing an increasing number of categories. Benefiting from the development of Convolutional Neural Networks (CNNs), performance on recognizing common object categories,~\ie, birds~\cite{wah2011caltech,van2015building}, cars~\cite{krause20133d}, and aircrafts~\cite{maji2013fine}, has increased steadily in the last decade. However, further explorations on discriminative features and representative feature embedding still face great challenges, which also limit the performance improvement on fine-grained recognition tasks.

Existing methods in solving this challenge can be roughly divided into two predominant groups, considering the different learning manners of fine-grained features. The first group tackles the fine-grained classification problem by generating rich feature representations~\cite{lin2015bilinear,wang2018learning,zhang2019learning,zheng2019looking,gao2020channel} or applying auxiliary constraints~\cite{sun2018multi,dubey2018maximum}. Leading by the bilinear pooling operation~\cite{lin2015bilinear}, second-order matrices across different channels are widely adopted by introducing compact homogeneous representations~\cite{gao2016compact}, hierarchical organizations~\cite{yu2018hierarchical} and other dimensional reduction operations~\cite{kong2017low,li2017factorized,wei2018grassmann}. Second-order pooling operations describe the fine-grained object features with rich pair-wise correlations between network channels, which are regarded as high-dimensional descriptors for discovering fine-grained features.
Beyond the exploitation of second-order matrices, trilinear attention across semantic channels~\cite{zheng2019looking,gao2020channel} are proposed to build global attention as well as maintaining the feature shapes. However, there still remain two major problems in this group of methods: 1) limited by the natural flaws of CNNs, the constructed correlations are still conducted locally and restricted in channel dimensions, and the long-term spatial relationships are still unperceived; 2) methods of the second group focus on the constraints on feature learning, but neglect the discriminative part features, which are necessary for distinguishing near-duplicated samples.

\begin{figure}[!t]
	\centering
	\includegraphics[width=\columnwidth]{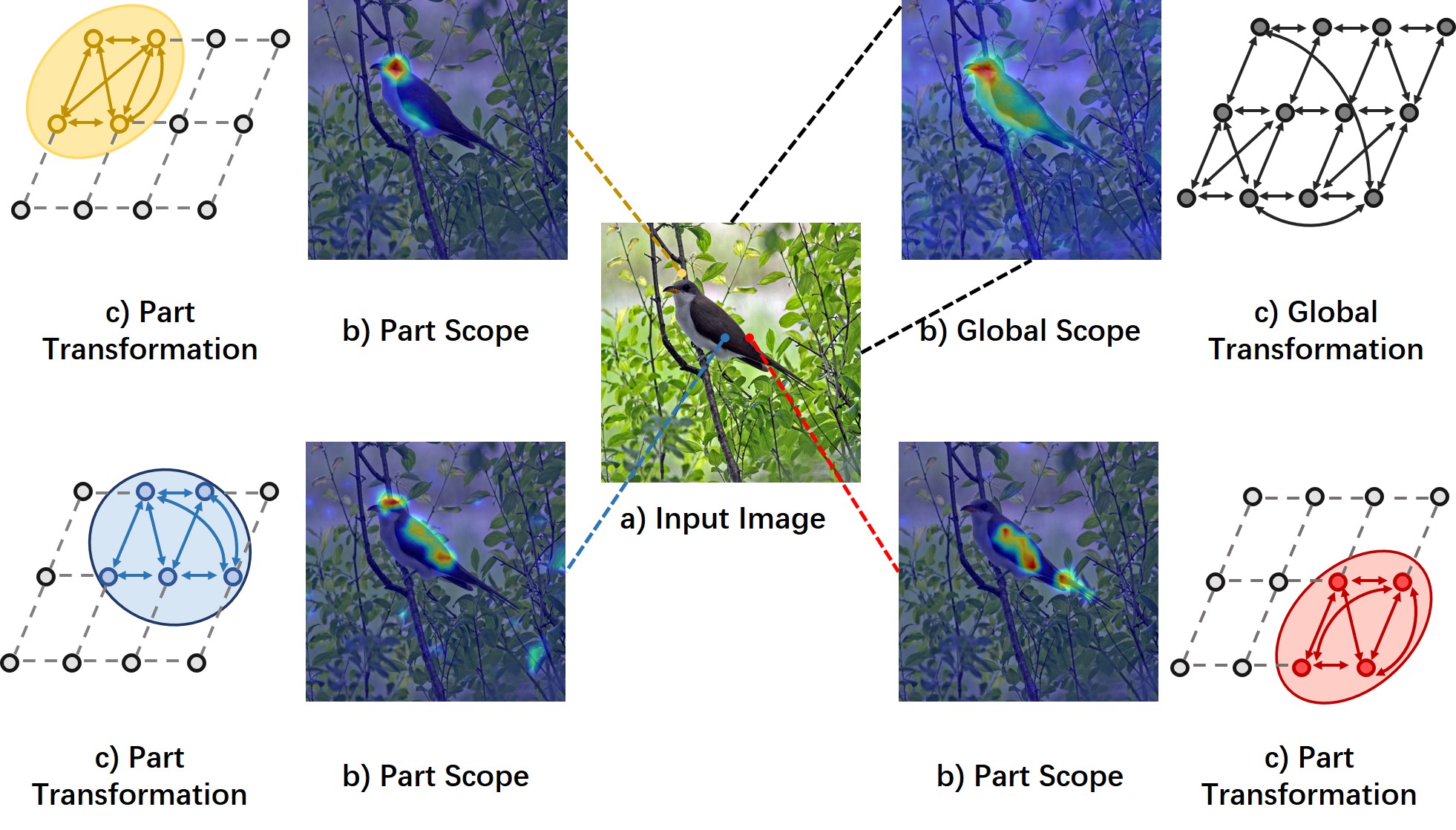}
	\caption{Motivation of the proposed approach. a): input image. b): one global feature scope and three automatic part discovery scopes. c): relation transformation module to construct global relation embedding and local relational embedding. Our framework first discovers the discriminative local regions and then constructs contextual interactions with relation transformations.
	}
	\label{fig:motivation}
\end{figure}

To tackle the second problem, methods of the other group propose to localize distinct parts for feature enhancement. Pioneer methods tend to obtain object parts by using part detectors~\cite{huang2016part,ge2019weakly} or segmentation parsers~\cite{kalayeh2018human,huang2020interpretable}. Whilst promising results have been achieved by introducing additional network branches, annotating part segmentation masks or bounding boxes is labor-consuming. Toward this issue, recent approaches~\cite{fu2017look,recasens2018learning,yang2018learning} resort to attention mechanisms for exploring auto-emerged parts during the backpropagation process. For example, Fu~\etal~\cite{fu2017look} proposed a zoom-in strategy to discover the most informative region in a progressive manner.
Nevertheless, auto-discovered object parts are usually unstable, which would lead to overfitting on local regions. Hence a contextual understanding of global relationships is needed in solving fine-grained classification problems.

To efficiently solve the deficiencies of these two groups of approaches, we propose to learn the discriminative part features and intrinsic correlations in one unified framework, namely PArt-guided Relational Transformers (PART). To be specific, our proposed PART is composed of two essential modules,~\ie, a part discovery module to investigate discriminative local regions, and a relational transformation module to construct local and contextual correlations. Inspired by the success in building long-term dependencies in Natural Language Processing (NLP), we make an attempt to learn the relationships among high-level features by using Transformers~\cite{vaswani2017attention}. In this paper, we embed the Transformer model~\cite{vaswani2017attention} into the natural design of representative CNN architectures, incorporating self-related global relationships of network features with the stack of transformation layers. The high-level feature maps after CNN encoders are decomposed as several individual vectors by their spatial dimensions. After that, each spatial region has the potential to perceive the contextual information of any other region and builds semantic correlations with each other, which help the networks understand the holistic objects rather than restricted in local patterns. This novel network architecture greatly broadens the receptive field of conventional CNN architectures, while simultaneously keeps their spatial distributions by a positional-aware encoding.

As shown in~\figref{fig:motivation}, our proposed approach first investigates the discriminative regions by generated Class Activation Maps (CAMs)~\cite{zhou2016learning}, exploring several part scopes and one global scope in~\figref{fig:motivation} b). These part regions are located by an automatic part discovery module, which maximizes the diversity of different semantic parts while finding discriminative regions simultaneously. With these scopes for object and part discovery, we further propose to learn the local relational embedding and global contextual embedding in~\figref{fig:motivation} c) with masked multi-head attentions. Hence not only the global relationships are explored in our approach, but also the locally-connected relations. With these two insightful modules, our approach can well embed the long-term dependencies and generate robust part localizations for the accurate classification of fine-grained species. It is worth noting that our proposed PART approach does not rely on the additional part branches in the inference time, which introduces less computation cost. Moreover, our proposed approach does not use additional data annotations or multi-crop testing operations, and achieves state-of-the-art performances on three public fine-grained recognition benchmarks,~\ie, CUB-200-2011~\cite{wah2011caltech}, Stanford-Cars~\cite{krause20133d}, and FGVC-Aircrafts~\cite{maji2013fine}. Further experimental results and visualized explanations reveal the effectiveness of the two proposed modules.

In summary, the contribution of this paper is three-fold:
\begin{enumerate}
\item We propose a novel PArt-guided Relational Transformers (PART) framework for fine-grained recognition tasks, which broadens the long-term dependencies of conventional CNNs with a part-aware embedding. To the best of our knowledge, it is the first attempt to handle fine-grained visual recognition tasks with joint CNN and Transformer architectures.
\item We propose a new part discovery module to mine the discriminative regional features, to robustly handle differential part proposals by class activation maps.
\item We propose a part-guided relational transformation module to learn intrinsic high-order correlations and conduct comprehensive experiments to verify the effectiveness of our proposed approach, surpassing state-of-the-art models by a margin on 3 widely-used benchmarks.
\end{enumerate}

The remainder of this paper is organized as follows:~\secref{sec:relatedwork} briefly reviews related works of fine-grained recognition tasks and~\secref{sec:method} describes how to discover distinct part features and explore the part-guided contextual relationship in our proposed approach. Qualitative and quantitative experiments are reported in~\secref{sec:experiment}.~\secref{sec:conclusion} finally concludes this paper.

\section{Related Work}\label{sec:relatedwork}
Early fine-grained visual recognition researches~\cite{yao2011combining,yao2012codebook,lazebnik2006beyond} focused on tackling limited labeling data with hand-crafted conventional features. For example, Yao~\etal~\cite{yao2011combining} proposed to use dense sampling strategies and random forest for mining discriminative features. Moreover, methods~\cite{yao2012codebook} using these hand-crafted features show its benefits considering the independence of human labeling or encoded dictionary features.
Recent ideas in solving fine-grained visual recognition tasks focus on the exploitation of deep features, which show both robustness and performance boost for handling discriminative features. Considering the different constraints of representation learning, we mainly consider two families of methods in this paper,~\ie, part awareness learning and regularized feature representation.

\subsection{Part awareness learning} Handling subtle differences using part-level local features has been widely studied in the past decades. Pioneer works~\cite{zhang2014part,huang2016part,he2017weakly,wei2018mask,he2019part,huang2020interpretable} tend to discover the part-level features with manual labels and amplify these local representations when obtaining the final features. For example, Zhang~\etal~\cite{zhang2014part} proposed a part-based R-CNN network, using part detectors for building pose-normalized representations. Krause~\etal~\cite{krause2015fine} proposed to detect the object part with an unsupervised co-segmentation and alignment manner, enhancing the final classification abilities.
Huang~\etal~\cite{huang2016part} integrated a part detection network in the recognition network and developed two streams to encode the part-level and object-level cues simultaneously.
In addition, Wei~\etal~\cite{wei2018mask} proposed to learn accurate object part masks using an additional FCN and then aggregated these masked features with the global ones. Although promising results have been achieved by using strongly supervised annotations, labeling segmentation masks and accurate object parts are still time-and labor-consuming.

On the other hand, deep neural networks have the natural ability in discovering object parts~\cite{gonzalez2018semantic} with only weakly supervised labels. Object parts would emerge automatically in the feature maps during the gradient learning process. Benefiting from these observations, recent works~\cite{xiao2015application,simon2015neural,lam2017fine,fu2017look,recasens2018learning,yang2018learning,ding2019selective,wang2020weakly} proposed to learn the part-aware features with auto-discovered part masks.
Leading by this motivation, Simon~\etal~\cite{simon2015neural} proposed a local region discovery model using deep neural activation maps, constructing part-level attentions for final representation. Lam~\etal~\cite{lam2017fine} proposed to search object parts using heuristic function and successor functions.
Wei~\etal~\cite{wei2017selective} proposed a flood-fill method which aggregated these features maps as holistic object activations.
However, this operation only considers the extraction of object features, which does not reflect the part localizations.
Peng~\etal~\cite{peng2017object} proposed to select image patches as candidates to select the informative regions and extract the feature of these regions for complementation. Wang~\etal~\cite{wang2020weakly} proposed to regularize the learned feature maps in a sparse representation manner and adopted axillary part branches for feature representation regularization.
Moreover, Ge~\etal~\cite{ge2019weakly} proposed a bottom-up architecture to obtain instance detection and segmentation via weakly supervised object labels. Combing these part proposals, a context encoding LSTM is further proposed to fuze the sequential part information for image classification.
However, these works only considered the localization of part regions, while the contextual relationship of these local regions is less explored. In this paper, we propose to discover the discriminative parts and investigate their spatial relations in one unified framework.

\subsection{Regularized feature representation} The other family for solving the fine-grained visual categorization task is to regularize the feature learning process or formulating rich feature relationships. Inspired by the bilinear pooling operation~\cite{lin2015bilinear}, high-order relationship matrices~\cite{yu2018hierarchical,zhang2019learning,zheng2019looking,zhao2021graph,du2020fine} has been widely used for feature representations. For example, Yu~\etal~\cite{yu2018hierarchical} proposed to learn the heterogonous feature interaction from different levels of feature layers, building compact cross-layer relationships.
However, simply adopting the second-order feature would lead to a dimensional disaster for optimization, thus subsequent works proposed to use low-rank presentations~\cite{kong2017low}, feature factorization~\cite{li2017factorized} or Grassmann constraints~\cite{wei2018grassmann} for building compact homogenous features.
These works~\cite{kong2017low,li2017factorized,wei2018grassmann} greatly reduce the computation cost of high order matrices but still show comparable results.
Besides the efforts on the second-order relationships, the recent network tends to explore the third-order relationships of feature channels. Leading by the non-local~\cite{wang2018non} design in building global relationships of different channels, several works~\cite{zheng2019looking,gao2020channel} have been proposed to explore the channel-wise relationship by attention mechanisms. Gao~\etal~\cite{gao2020channel} proposed to learn the channel-wise relationship by image-level relations and cross-image relations by a contrastive constraint.

Thinking the feature representation problem from another perspective, other works tend to explore the cross-layer relationships~\cite{chang2020devil,zheng2017learning,sun2018multi,luo2019cross,wang2018learning,ji2020attention} or auxiliary information~\cite{aodha19presence}.
Sun~\etal~\cite{sun2018multi} proposed a multi-attention multi-constraint (MAMC) to regularize the same class focusing on the same attention region.
Based on this MAMC module, Luo~\etal~\cite{luo2019cross} proposed to learn relationships between different images and different network layers.
In addition, Chen~\etal~\cite{chen2019destruction} proposed a destruction and reconstruction framework to learn the local feature transformation rules for obtaining robust feature representations. Moreover, an attentive pair relation network~\cite{zhang2020learning} is proposed to fine the group-wise relationship inter-and intra-classes.
Beyond these designs on network architectures, adopting pair-wise confusions~\cite{dubey2018pairwise} and maximizing the entropy~\cite{dubey2018maximum} also show effectiveness in improving the final representation.
Different from the aforementioned methods, in this work, we propose to find the feature embedding by two steps,~\ie, discovering local parts and transforming for relational embedding, which incorporates the advantages of part discovery but also constructs reliable feature regularization.

\subsection{Vision Transformers} Transformer architectures~\cite{vaswani2017attention,choromanski2020rethinking} have achieved great progress in the field of natural language processing. Benefiting from its ability for long-term relationship construction, several works~\cite{carion2020end,zhu2020deformable} proposed to introduce this architecture based on the convention CNN encoders.
For example, Carion~etal~\cite{carion2020end} proposed to localize different object regions with multiple attention regions in transformer layers. Beyond these works, recent ideas~\cite{dosovitskiy2020image,liu2021swin,yuan2021tokens,srinivas2021bottleneck} proposed to directly encode the images patches,~\eg, $16\times 16$ as the basic pattern to understanding images. As the representative work, ViT~\cite{dosovitskiy2020image} built a strong pretrained transformer backbone, which greatly surpass the performance of conventional CNNs. However, this encoding manner also introduces huge computation costs. Several works proposed to introduce the shifted windows~\cite{liu2021swin} and local encoding~\cite{yuan2021tokens} to build hierarchical encoders, which exploit the advantages of local understandings in CNNs and also reduce computation costs. Based on these successful backbone encoders, utilizing vision transformers has also shown great benefits in semantic segmentation~\cite{xie2021segformer} and fine-grained visual classification~\cite{he2021transfg}.
Different from the TransFG~\cite{he2021transfg} with ViT~\cite{dosovitskiy2020image} encoders, our model builds relationships between high-level CNN features. In this manner, the local patterns with detailed features can be well represented and the contextual relationships are also constructed simultaneously.

\begin{figure*}[!t]
\begin{center}
\includegraphics[width=1\textwidth]{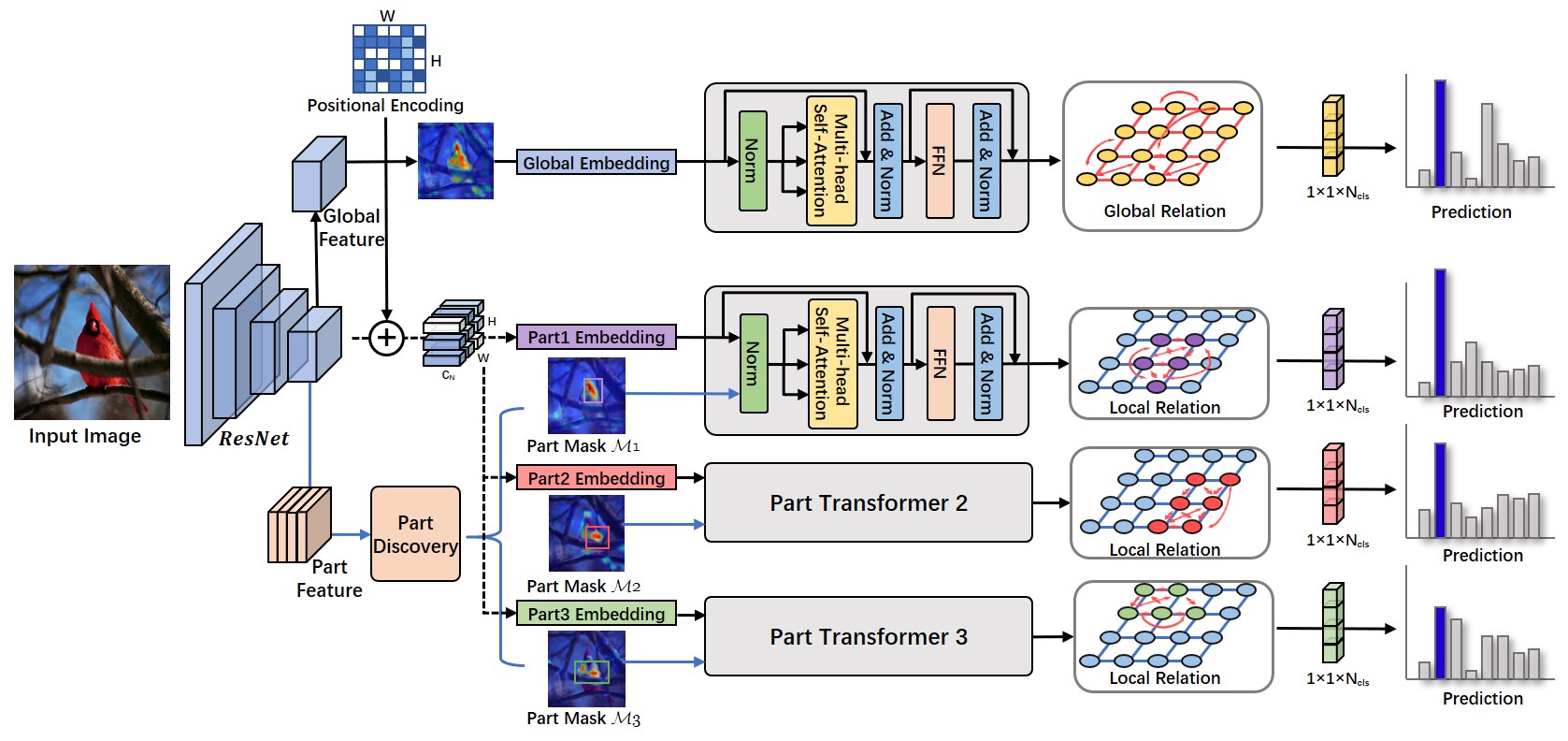}
 \caption{Overall pipeline of our proposed PArt-guided Relational Transformers (PART) framework, which is composed of a part discovery module and a relational transformation module. The part discovery module automatically generates object parts from class activation maps during the forward propagation process. And the relational transformation module takes advantages of object parts and constructs a global interaction and S (S=3 for example) local interactions. During the inference stage, only the global branch is maintained, introducing less computation cost.}
 \label{fig:pipeline}
 \end{center}
\end{figure*}

\section{Approach}\label{sec:method}
\subsection{Problem Formulation}\label{sec:formulate}
Our proposed PART approach for fine-grained visual recognition is illustrated in~\figref{fig:pipeline}. Our first key idea in exploring fine-grained recognition is to build contextual dependencies by the conjunction of successful Transformers with conventional Convolutional Neural Networks (CNNs). The second motif is to regularize the network embedding be aware of the local discriminative regions, which is desired for distinguishing visually similar samples. Beyond these designs, one global relation transformation and three local relation transformations are proposed to learn the crucial attention features by building regionally dense connections during the gradient descent process.

Given an input image $\mc{I}$, the general procedure in optimizing fine-grained recognition task is to optimize the target $\mathbb{E}$ between prediction and groundtruth label $\br{y}$:
\begin{equation}\label{eq:globalform}
\br{\theta}^{*}=\arg\min_{\br{\theta}} \mathbb{E}(\Phi(\mc{I};\br{\theta})^{\top}\br{w}_c, \br{y}),
\end{equation}
where $\Phi(\cdot;\br{\theta})$ represents feature extractor $\Phi$ with learnable parameters $\br{\theta}$. $\br{w}_c$ represents the final classification projection vectors. Different from the general classification optimizations, our proposed network first generates two different feature embedding,~\ie, $\br{X}^p,\br{X}^g \in \mathbb{R}^{W \times H \times C}$. Thus we could localize the object part proposals $\br{P}$ with part discovery module $\mc{M}_{part}$,~\ie, $\mc{P}=\mc{M}_{part}(\br{X}^p)$, and construct high-order correlations with Transformers $\mc{G},\mc{T}$ for global and local branches.

Beyond these foundations, we thus make a general assumption that the object part proposals $\mc{P}$ share the same groundtruth labels $\br{y}$ with the global image $\mc{I}$. Hence the overall optimization in our framework can be presented as:
\begin{equation}\label{eq:partform}
\begin{split}
\br{\theta}^{*}=\arg\min_{\br{\theta}} \mathbb{E}((\Phi(\mc{I};\br{\theta}_b) \circledast \mc{G}(\br{X}^g;\br{\theta}_{g}) )^{\top}\br{w}_g, \br{y})+ \\
 \sum_{\br{p} \in \mc{P} } \lambda_{\br{p}} \mathbb{E}((\Phi(\mc{I};\br{\theta}_b) \circledast \mc{T}^{\br{p}}(\br{p},\br{X}^p;\br{\theta}_{\br{p}})) ^{\top}  \br{w}_l, \br{y}) ,
\end{split}
\end{equation}
where $\circledast$ indicates that the latter term extract features from the former one. $\br{w}^{ \{ g,l \} }$ represent the global and local classification projections and $\lambda$ is the balanced weight.
Hence our network introduces the relational transformation constraints and part localization constraints in fine-grained classification tasks. We will elaborate the details of these two modules in~\secref{sec:partdis} and~\secref{sec:relationembed}.

\subsection{Part Discovery Module}\label{sec:partdis}
Excavating discriminative features by localizing object parts has been widely explored by previous works~\cite{zhang2014part,huang2016part,he2017weakly,wei2018mask,he2019part,huang2020interpretable} with bounding box annotations or segmentation mask annotations. In our proposed PART framework, we propose to explore the object parts in a weakly supervised manner, which takes advantage of the class activation maps generated during the gradient descent process. Recent research~\cite{gonzalez2018semantic} shows that object parts emerge automatically during the learning of classification tasks,~\ie, during the training process, network features can be automatically regularized for excavating discriminative features and distinguishing near-duplicated objects.

Based on these findings, several recent ideas, \eg, RA-CNN~\cite{fu2017look}, MA-CNN~\cite{zheng2017learning}, propose to constrain these activation regions into part groups by their similarities. Different from these works, our part discovery module aims to provide discriminative region priors $\mc{P}$ for the transformer module. With these priors, the transformer module is proposed to mine the contextual object clues within each region. To achieve this, our proposed part discovery module follows two meaningful rules: 1) the highly activated features maps are attached with high priorities during the back-propagation process; 2) different activated parts should have the minimum overlap for discriminative part finding.  These rules ensure CAMs are not restricted to local regions and enhance the generalization ability of deep models. Besides that, our part discovery module can be trained end-to-end in one unified scheme and does not cost additional computation costs for inference.

Given the extracted backbone features $\br{X}^p \in \mathbb{R}^{W \times H \times C}$, we define the class activation scores $\br{s}_c$ of the $c$th channel:
\begin{equation}\label{eq:partscore}
\begin{split}
\br{v}_c&=\frac{1}{W\times H} \sum_{i=1}^H \sum_{j=1}^W \br{X}^{p}_{i,j,c},\\
\br{s}_c&= \frac{\br{v}_c} {\sum_{k=1}^{C}(\br{v}_k+\epsilon)},
\end{split}
\end{equation}
where $\epsilon$ is set to keep nonzero of the denominator. With this score as the ranking indicator, we thus rearrange the $\br{X}^{*}  = \bs{ActivationSort}(\br{X}$, $\br{s})$ and build a part proposal stack to select the most important one from the stack top.

Starting from this part proposal stack, we first sample a random rate for each part $\eta_c \sim  {N}(\mu,\sigma)$, enhancing robustness for threshold selection. For each selected feature $\br{x}^{*}_{c} \in \br{X}$, the part proposals $\br{p}^{*}$ are localized with the $\bs{ROICrop}$ operation:
\begin{equation}\label{eq:croppart}
\br{p}^{*}_{i,j,c} =
\left\{\begin{matrix}
 1&   \frac{\br{x}^{*}_{i,j,c}-\min (\br{x}^{*}_{c}) }{ \max(\br{x}^{*}_{c})-\min (\br{x}^{*}_{c}) } \ge \eta_c, \\
 0&   \frac{\br{x}^{*}_{i,j,c}-\min (\br{x}^{*}_{c}) }{ \max(\br{x}^{*}_{c})-\min (\br{x}^{*}_{c}) } < \eta_c,
\end{matrix}\right.
\end{equation}

\begin{algorithm}[!t]
\caption{Part Discovery Algorithm}\label{alg:part}
\hspace*{0.02in} {\bf Input:}
Part feature maps $\br{X} \in \mathbb{R}^{W \times H \times C}$, Hyperparameters: Number of parts $N$, Selection range $R$, IoU threshold $th$\\
\hspace*{0.02in} {\bf Output:}
Part Selection Set:  $\mc{P} = \{\br{p}_1,\br{p}_2, \ldots, \br{p}_N\}$
\begin{algorithmic}[1]
\State Initialize selection set $\mc{P}=\phi$
\While{ $|\mc{P}|<N$ }
    \State Calculate part activation scores $\br{s}$ of $\br{X}$  in Eqn.~\eqref{eq:partscore}
    \State Rearrange $\br{X}$ by the ranking of $\br{s}$:
    \Statex $\qquad   \br{X}^{*}  = \bs{ActivationSort}(\br{X}$, $\br{s})$
    \For{ $ \br{x}^{*}_i \in \br{X}^{*} $}
    \State Sample Random Activation Rate: $\eta_i \sim  {N}(\mu,\sigma)$
    \State $\widetilde{\br{x}_i} =\bs{Norm}(\br{x}^{*}_i) $
    \State $\br{p}^{*}_i= \bs{ROICrop} (\widetilde{\br{x}_i} \ge \eta_i)$:
    \For{ $ \br{p}_j \in \mc{P}$}
    \State Calculate bounding box IoU:
    \Statex \qquad \qquad \qquad  BIoU$=\bs{BboxIoU}(\br{p}_j,\br{p}^{*}_i)$
    \If{BIoU $> th$}
    \State break
    \EndIf
    \State Add new legal proposals: $\mc{P} = \mc{P} \cup \{ \br{p}^{*}_i \} $
    \EndFor
    \EndFor
    \State iter = iter+1
    \If{iter $>$ maxiter and $|\mc{P}|<N$}
	\State Select top $N$ parts from $\br{X}^{*}$ and repeat Step $6\sim 8$
    \EndIf
\EndWhile
\State \Return Part Selection Set $\mc{P}$
\end{algorithmic}
\end{algorithm}

The final bounding box regions can be automatically obtained by calculating the range of coordinates in the x-axis and y-axis.
However, as an unsupervised part discovery method, different feature maps usually localize similar part regions, which may restrict the learning of discriminative features. Inspired by the Non-Maximum Suppression operation~\cite{rosenfeld1971edge} in edge detection, we remove the redundant part proposals by calculating the Intersection over Union (IoU) of the current bounding box with the existing boxes in the part selection set $\mc{P}$. By repeating this process, all the legal part proposals will be automatically added into the final selection set until $|\mc{P}| \ge N$. Specially, we further apply an iteration terminator if all this part selection module could not find appropriate object parts.
The detailed process is elaborated in Algorithm~\ref{alg:part}. With this proposed part selection module, our approach has the potential to find discriminative regional features automatically.

\subsection{Relation Embedding with Transformers}\label{sec:relationembed}
As mentioned in~\secref{sec:formulate}, one crucial problem in fine-grained recognition is to build contextual dependencies for different semantic parts.
However, limited by the design of 2D Convolutional Neural Networks, each feature unit has a restricted receptive field which is fixed by the convolutional kernels. To solve this problem, we propose to adopt the Transformer~\cite{vaswani2017attention}, which has been established as a state-of-the-art transduction model to take advantage of long-term correlation in semantic sentence understanding. Motivated by its succusses in natural language processing, we make an attempt to introduce it into the fine-grained recognition task, which is presented as the transformation operation in Eqn.~\eqref{eq:partform}.

\textbf{Fine-grained recognition with visual transformers.} The conventional transformers tend to take the input of a whole language sentence and building correlations of different semantic words, which are now used to replace the Recurrent Neural Networks (RNNs) in many NLP tasks to build long-term dependencies. Besides its superior performance in relation to modeling, embedding this architecture into existing CNN backbones,~\eg, ResNet, VGGNet, is thus essential. In our PART framework, CNN backbones and transformers play different roles in fine-grained visual understanding. Solely adopting transformer models may result in a larger receptive field in relational modeling, but leads to a high computation burden for network optimization. In addition, keeping the main body of CNNs can also benefit from the pretrained common knowledge,~\eg, ImageNet~\cite{deng2009imagenet}. Hence in our approach, we propose to make a conjunction of conventional CNNs and the successful Transformer model. The detailed architecture is illustrated in~\figref{fig:transformer}.

\textbf{Positional encoding.}
Another drawback in attention-based modeling methods is that structural information would be forgotten, especially when transforming CNN feature maps $\br{X}\in \mathbb{R}^{W \times H \times C}$ to multiple vectors. When dense connections are constructed, their original structural information could be easily lost.
Inspired by~\cite{vaswani2017attention,carion2020end} in keeping position information, here we adopt the positional encoding $\br{E}$ into the original backbone features $\br{X}$. With the positional encoded features regardless of part proposals in~\figref{fig:transformer}, the key idea is to build global relation matrices $\br{A} \in \mathbb{R}^{WH \times WH}$, indicating the dense dependencies of each pixel unit to all the other pixels. Let $i, j$ be the positions of query and key values, this spatial correlation can be formulated as:
\begin{equation}\label{eq:attention}
\br{A}_{i,j} = (\br{X}_{i,:} +  \br{E}_{i,:}) \br{W}_{qry} \br{W}_{key}^{\top} (\br{X}_{j,:}+\br{E}_{j,:})^{\top},
\end{equation}
where $\br{W}_{\{qry,key\}}$ denote the learnable weight of query and key projections. Here we adopt the relative positional encoding of Transformer-XL~\cite{dai2019transformer} to build different positional encoding for different pixels. We build the 1D learning for relative encoding and 2D positional learning for absolute encoding.  By expanding Eqn.~\eqref{eq:attention} and using learnable $\br{u},\br{v}$, this relative encoding correlations $\br{A}$ can be presented as:
\begin{equation}\label{eq:relative}
\begin{split}
\br{A}_{i,j}&= \br{X}_{i,:}^{\top} \br{W}_{qry}^{\top} \br{W}_{key}\br{X}_{j,:}+ \br{X}_{i,:}^{\top}\br{W}_{qry}^{\top} \widehat{\br{W}}_{key}\br{R}_{i-j}\\
& + \br{u}^{\top}\br{W}_{key}^{\top}\br{X}_{j,:}+ \br{v}^{\top}\widehat{\br{W}}_{key}\br{R}_{i-j},
\end{split}
\end{equation}
where key weights are spilt into $\br{W}_{key}$ and relative position $\widehat{\br{W}}_{key}$ respectively. $\br{R}$ denotes the sinusoid encoding matrix in~\cite{vaswani2017attention} without learnable weights.

\textbf{Multi-head attention with transformers.}
The relational matrices $\br{A}_{i,j}$ are multiplied with the original feature $\br{X}$, resulting in the enhanced feature of one head $\br{H}$:
\begin{equation}\label{eq:onehead}
\br{H}= \frac{\exp(\br{A}_{:} / \sqrt{C} )} {\sum_{j=1}^{WH} \exp(\br{A}_{:,j} / \sqrt{C} ) } \br{X} \br{W}_{val}.
\end{equation}
As illustrated in~\figref{fig:transformer}, each transformer unit is composed of multiple transformer heads,~\ie, $\mc{H} = \{ \br{H}_1,\ldots,\br{H}_m\}$ and $m$ denotes the number of attention heads. Employing multiple attention heads encourages different projections of query, key and value triplets to construct relational embedding. Then these attention heads are concatenated and fused with the feed-forward network, and the final output $\br{O}$ can be formally presented as:
\begin{equation}\label{eq:multihead}
\br{O}= \bs{Concat}(\br{H}_1,\ldots,\br{H}_m)\br{W}^o + \br{b}^o,
\end{equation}
where $\br{W}^o \in \mathbb{R}^{mC_h \times C} $ denotes the learnable weight for linear transformations, and $C_h$ denotes the dimension of one attention head.

\begin{figure}[!t]
	\centering
	\includegraphics[width=\columnwidth]{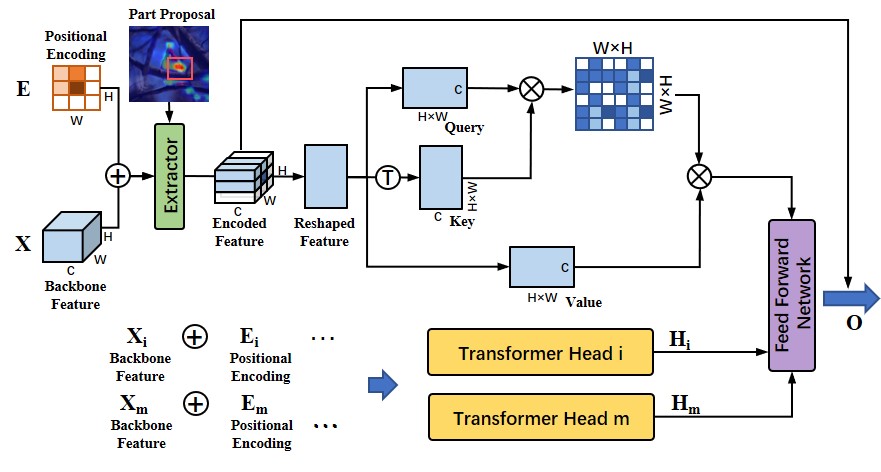}
	\caption{Illustration of the part-guided relational transformation module. When input backbone features and positional encoding information, this module builds local transformation embedding with efficient self-attention mechanisms with a multi-head ($\br{H}_1 \ldots \br{H}_m$) encoder manner.
	}
	\label{fig:transformer}
\end{figure}

\textbf{Relations with CNNs.}
Based on this designment of relational embedding transformers, networks have the potential to learn the long-term dependencies between any two pixels. Notably, this attentive transformer layer and convolutional layers process 2D features in similar ways. In other words, the transformer layers are attached with more relational embedding learning matrices compared to the convolutional layers and in extreme cases, the transformers can be degenerated to simulate CNNs. The main theorem from~\cite{cordonnier2020relationship} is as follows:
\newtheorem{thm}{\bf Theorem}[section]
\begin{thm}\label{thm1}
A transformer layer with $m$ heads and with hidden dimension $C_h$ and output dimension $C_{out}$ can simulate convolutional neural networks with $\min(C_h, C_{out})$ channels.
\end{thm}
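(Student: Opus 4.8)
The plan is to prove this constructively, by exhibiting an explicit assignment of the transformer's learnable parameters---the query/key projections $\br{W}_{qry},\br{W}_{key},\widehat{\br{W}}_{key}$, the value projection $\br{W}_{val}$, the output projection $\br{W}^o$, and the relative positional encoding together with the vectors $\br{u},\br{v}$ of Eqn.~\eqref{eq:relative}---under which the multi-head output $\br{O}$ of Eqn.~\eqref{eq:multihead} reproduces a prescribed convolution. First I would fix the dictionary between the two operations: a $K\times K$ convolution has $K^2$ entries in its receptive field, so I would take $m=K^2$ heads and put each head $h$ in one-to-one correspondence with a spatial offset $\br{\Delta}_h$ in the kernel support. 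The target to be matched is then $\br{O}_{i} = \sum_{h} \br{X}_{i-\br{\Delta}_h}\,\br{W}^{(h)}$ for per-offset filter matrices $\br{W}^{(h)}$.

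The crux of the argument is to force each head to attend to a single pixel, namely the one shifted by $\br{\Delta}_h$. To this end I would discard the content-dependent terms of Eqn.~\eqref{eq:relative} (choosing $\br{W}_{qry}$ so that its contributions vanish) and retain only the position-driven term $\br{v}^{\top}\widehat{\br{W}}_{key}\br{R}_{i-j}$. By letting $\br{R}_{i-j}$ carry the relative coordinates and their squares and choosing $\br{v},\widehat{\br{W}}_{key}$ accordingly, the score of head $h$ becomes a quadratic $\br{A}_{i,j} = -\alpha\bigl(\lVert (i-j)-\br{\Delta}_h\rVert^2 + c\bigr)$ in the relative position, maximized exactly at $j=i-\br{\Delta}_h$.

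With the scores in this form I would take the temperature $\alpha\to\infty$ so that the softmax in Eqn.~\eqref{eq:onehead} converges to the indicator $\mathbbm{1}[\,j=i-\br{\Delta}_h\,]$; in this limit head $h$ reads off exactly $\br{X}_{i-\br{\Delta}_h}\br{W}_{val}$. It then remains pure linear-algebra bookkeeping: I would pick the per-head value projections and assemble the concatenation--projection of Eqn.~\eqref{eq:multihead} (the matrix $\br{W}^o$, with $\br{b}^o=0$) so that the $h$th block contributes $\br{X}_{i-\br{\Delta}_h}\br{W}^{(h)}$, whence $\br{O}$ equals the desired convolution. The channel count $\min(C_h,C_{out})$ is exactly the bottleneck of this pipeline: each head can transport at most $C_h$ feature channels through $\br{W}_{val}$, while the output is capped at $C_{out}$ by $\br{W}^o$.

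The step I expect to be the main obstacle is the concentration of attention onto a single offset. Every other step is an exact linear identity, but making the softmax collapse to a Dirac mass requires the quadratic positional encoding together with the limiting ($\alpha\to\infty$) argument, so the word ``simulate'' should be read as ``approximate arbitrarily well, exact in the limit'' rather than as an algebraic equality for fixed finite weights. I would therefore spend most of the care verifying that the quadratic score is uniquely maximized at the intended relative position and that the error of the finite-$\alpha$ softmax is uniformly controllable over all query positions $i$.
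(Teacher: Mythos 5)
Your construction is essentially the same as the paper's proof (both abstracting Cordonnier~\etal~\cite{cordonnier2020relationship}): one head per kernel offset, content terms of Eqn.~\eqref{eq:relative} zeroed out, a quadratic positional score $-\alpha\lVert (i-j)-\br{\Delta}_h\rVert^{2}+\alpha c$ carried by $\br{v}^{\top}\widehat{\br{W}}_{key}\br{R}_{i-j}$, softmax collapsing to an indicator as $\alpha\to\infty$, and the $\min(C_h,C_{out})$ count coming from the value/output projection bottleneck, with your exact-in-the-limit reading of ``simulate'' matching the paper's limiting argument. One minor detail: choosing $\br{W}_{qry}$ alone does not kill the content-dependent term $\br{u}^{\top}\br{W}_{key}^{\top}\br{X}_{j,:}$, which is why the paper sets both $\br{W}_{key}=\br{W}_{qry}=0$ (equivalently you could take $\br{u}=0$), but this is an immediate fix rather than a gap.
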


Detailed proof can refer to~\cite{cordonnier2020relationship}, which is abstracted in Appendix~\ref{sec:appendix}. Theorem~\ref{thm1} shows that the transformer modules have the potential to represent existing CNN layers but have the strong potential to explore cross-pixel relations. It means that the CNNs are ``restricted" transformers that have fixed attentional embedding or limited receptive fields.

\textbf{Part-guided relation embedding.} As aforementioned in Eqn.~\eqref{eq:partform}, our proposed optimization framework aims to construct global relations as well as discovering discriminative part relations simultaneously. Benefiting from the strong relation embedding ability of transformer models, we propose to learn the part-level relations by the auto-discovered part in~\secref{sec:partdis}. We first adopt the learned part masks $\br{p}_i \in \mc{P}$ as attentive masks before feeding into the transformer layers.
Thus the learned architecture of each attention head can be presented as:
\begin{equation}\label{eq:parthead}
\br{H}^{\br{p}_i}= \bs{Attention}( (\br{E}(\br{p}_i)+\br{X}_{i,:}\odot \br{p}_i) ),
\end{equation}
where $\odot$ denotes the Hadamard product for mask region selection. We thus concatenate the multiple heads of each part following Eqn.~\eqref{eq:multihead}, forming output $\br{O}^{\br{p}_i}$ of each part $\br{p}_i$. After passing the mask selection operation, a dense connection within each local part is constructed, which discovers the spatial relations while neglects the unnecessary ones.

Moreover, the part-guided transformer units can be stacked in sequence, passing each enhanced feature $\br{O}^{\br{p}_i}$ into the next layer. This operation enlarges the reception field of each pixel and constructs strong correlations by mixing connections hops:
\begin{equation}\label{eq:stacktrans}
\br{O}^{\br{p}_i}_{t}= \mc{T}^{\br{p}_i}_{t-1}(\br{O}^{\br{p}_i}_{t-1},\br{E}_{t}+\br{X},\br{p}_i), t=1\ldots S,
\end{equation}
where $S$ denotes the number of stacked layers of transformer model $\mc{T}$. Similarly, the global relation embedding transformer $\mc{G}$ is defined in similar manners.

In our training scheme, we conduct a collaborative learning scheme of the global and $N$ part transformation branches in~\figref{fig:pipeline}. Each branch is supervised with an individual softmax cross-entropy loss for minimizing target $\mathbb{E}$ in Eqn.~\eqref{eq:partform}. Thus our overall learning is composed of one global loss for contextual relation learning and $N$ part losses for local feature discovery. Note that our proposed PART approach resort to the $N$ auxiliary part branches for feature regularization in the training stage, whilst not relying on these parts in the testing stage. Moreover, simply aggregating these features would also lead to overfitting issues and inferior results. Compared to the ResNet backbone, we only introduce limited learnable matrices for relational embedding and conduct an end-to-end training with the conjunction of part discovery module and relational embedding module. Our framework greatly alleviates the computation burden and retains the discriminative part simultaneously.

\section{Experiments}\label{sec:experiment}
In this section, we first introduce the experimental settings with dataset statistics in~\secref{sec:expsettings} and elaborate the implementation details and network architectures in~\secref{sec:expimplementaion}. Subsequently, the comparison with state-of-the-art methods is exhibited in~\secref{sec:expcomp}. The detailed performance analyses, as well as the explainable visualizations, are presented in~\secref{sec:expperformance}.

\subsection{Experimental Settings}\label{sec:expsettings}
To evaluate the effectiveness of our proposed PART approach, we conduct experiments on three public popular benchmarks, namely Caltech-UCSD Birds (CUB-200-2011)~\cite{wah2011caltech}, Stanford-Cars~\cite{krause20133d}, and FGVC-Aircrafts~\cite{maji2013fine}. CUB-200-2011 dataset~\cite{wah2011caltech} contains 11,788 images of 200 wild bird species, which is the most widely used benchmark. Stanford-Cars dataset~\cite{krause20133d} includes 16,185 images of 196 car subcategories. FGVC-Aircraft dataset~\cite{maji2013fine} contains 10,000 images of 100 classes of fine-grained aircrafts.
We follow the standard training/testing splits as in the original works~\cite{wah2011caltech,krause20133d,maji2013fine}. For evaluations, we use the top-1 evaluation criterion following previous works and only adopt the classification label for supervised training without any additional part annotations.

\begin{table}[t]
\centering{
\caption{Comparison with state-of-the-art results on CUB-200-2011 dataset.  $\dag$: using additional annotations. }
\label{table:CUB}
\renewcommand{\arraystretch}{1.2}
\begin{tabular}{c|c|c|c}
\toprule
Type&Method&Backbone& Accuracy \\
\midrule
\multirow{7}*{\tabincell{c}{Part\\Based}}
&PA-CNN$\dag$~\cite{krause2015fine}&VGG-19&84.3\%\\
&FCAN$\dag$~\cite{liu2016fully}&ResNet50&84.7\%\\
&RA-CNN~\cite{fu2017look}&VGG-19&85.3\%\\
&MA-CNN~\cite{zheng2017learning}&VGG-19&86.5\%\\
&Interpret~\cite{huang2020interpretable}&ResNet101&87.3\%\\
&NTS-Net~\cite{yang2018learning}&ResNet50&87.5\%\\
&DF-GMM~\cite{wang2020weakly}&ResNet50&88.8\%\\
\midrule
\multirow{13}*{\tabincell{c}{Feature\\Based}}
&Bilinear~\cite{lin2015bilinear}&VGG-16&84.0\%\\
&MAMC~\cite{sun2018multi}&ResNet101&86.5\%\\
&MaxEnt~\cite{dubey2018maximum}&DenseNet-161&86.5\%\\
&DFL-CNN~\cite{wang2018learning}&VGG-16&86.7\%\\
&PC~\cite{dubey2018pairwise}&DenseNet-161&86.9\%\\
&HBP~\cite{yu2018hierarchical}&VGG-16&87.1\%\\
&DFL-CNN~\cite{wang2018learning}&ResNet50&87.4\%\\
&Cross-X~\cite{luo2019cross}&ResNet50&87.7\%\\
&DCL~\cite{chen2019destruction}&ResNet50&87.8\%\\
&TASN~\cite{zheng2019looking}&ResNet50&87.9\%\\
&ACNet~\cite{ji2020attention}&ResNet50&88.1\%\\
&API-Net~\cite{zhang2020learning}&ResNet101&88.6\%\\
&PMG~\cite{du2020fine}&ResNet50&\textbf{89.6\%}\\
\midrule
\multirow{2}*{\tabincell{c}{Joint}}
&PART (Ours)&ResNet50&\textbf{89.6\%}\\
&PART (Ours)&ResNet101&\textbf{90.1\%}\\
\bottomrule
\end{tabular}
}
\end{table}

\subsection{Implementation Details}\label{sec:expimplementaion}
For the implementation of baseline and our proposed PART framework, we adopt ResNet-50 and ResNet-101~\cite{he2016deep} networks pretrained on ImageNet~\cite{deng2009imagenet} as our backbone for fair comparisons. We remove the last bottleneck in stage 4 to directly get the feature of 512 dimensions. We use the SGD optimizer with an initial learning rate of $8e-4$ annealed by 0.1 for every 60 epochs. We adopt the commonly used data augmentation techniques,~\ie, random cropping and erasing, left-right flipping, and color jittering for robust feature representations. Our model is relatively lightweight and is trained end-to-end on two NVIDIA 2080Ti GPUs for acceleration. We use the group-wise sampler with a batchsize of 16, sampling 4 samples for each class.
For hyper-parameter fine-tuning, we randomly select 10\% of the training set as validation.
The balanced weights $\lambda_{\br{p}}$ for multiple part loss functions are empirically set as 0.1.
The channel dimension for feeding into relation embedding module is $C=512$ and the part branch number is set as 4, the IoU threshold is $th=0.6$. The stacked layer number $S$ is set as 3 for the global branch and 1 for the local branches. The training and testing protocol follow the state-of-the-art works~\cite{luo2019cross,chen2019destruction,zhang2020learning} using random cropping of $448\times448$ in training and only use one-crop during inference, which does not rely on the multi-crop features like previous works~\cite{fu2017look,zheng2017learning,yang2018learning}. Notably, in the inference stage, we only use the global feature for accuracy reports, and the local branches can be simply omitted for acceleration.

\subsection{Comparison with State of The Arts}\label{sec:expcomp}
In this subsection, we conduct experimental comparisons with state-of-the-art models on three representative benchmarks,~\ie, CUB-200-2011~\cite{wah2011caltech}, Stanford-Cars~\cite{krause20133d}, and FGVC-Aircrafts~\cite{maji2013fine}.

\textbf{Comparison on CUB-200-2011 dataset.} CUB-200-2011 is the most widely-used dataset, consisting of 200 bird species with visually similar appearances. The classification accuracy can be found in~\tabref{table:CUB}. Following the aforementioned group ways in~\secref{sec:intro}, we divide the state-of-the-art models into two groups,~\ie, feature regularized learning and part-based learning. It can be found that the earlier works,~\eg,~\cite{krause2015fine,liu2016fully} relies on the accurate part localization information, while achieving only 84.7\% base performance. Hence with the development of auto-discovered part localization methods~\cite{zheng2017learning,yang2018learning}, the classification accuracy has been increased by over 3\% without any additional annotations.

On the other hand, feature-learning-based algorithms also show advantages in extracting informative knowledge and forming efficient representations. Hence our proposed PART approach is a joint learning algorithm with collaborative feature relation learning and part feature discovery. Combining these merits, our proposed method generates state-of-the-art results of 89.6\% accuracy, which demonstrates the effectiveness of our proposed learning framework.
Following~API-Net~\cite{zhang2020learning}, we extend our model with the deeper ResNet101 backbones, which achieves 90.1\% and is 1.5\% higher than~\cite{zhang2020learning}.

\begin{table}[t]
\centering{
\caption{Comparison with state-of-the-art results on FGVC-Aircraft dataset.}
\label{table:Aircraft}
\renewcommand{\arraystretch}{1.2}
\begin{tabular}{c|c|c|c}
\toprule
Type&Method&Backbone& Accuracy \\
\midrule
\multirow{3}*{\tabincell{c}{Part\\Based}}
&RA-CNN~\cite{fu2017look}&VGG-19&88.2\%\\
&MA-CNN~\cite{zheng2017learning}&VGG-19&89.9\%\\
&NTSNet~\cite{yang2018learning}&ResNet50&91.4\%\\
\midrule
\multirow{14}*{\tabincell{c}{Feature\\Based}}
&Bilinear~\cite{lin2015bilinear}&VGG-16&84.1\%\\
&PC~\cite{dubey2018pairwise}&DenseNet-161&89.2\%\\
&MaxEnt~\cite{dubey2018maximum}&DenseNet-161&89.8\%\\
&HBP~\cite{yu2018hierarchical}&VGG-16&90.3\%\\
&ACNet~\cite{ji2020attention}&VGG-16&91.5\%\\
&DFL-CNN~\cite{wang2018learning}&ResNet50&92.0\%\\
&ACNet~\cite{ji2020attention}&ResNet50&92.4\%\\
&CIN~\cite{gao2020channel}&ResNet50&92.6\%\\
&Cross-X~\cite{luo2019cross}&ResNet50&92.7\%\\
&CIN~\cite{gao2020channel}&ResNet101&92.8\%\\
&DCL~\cite{chen2019destruction}&ResNet50&93.0\%\\
&API-Net~\cite{zhang2020learning}&ResNet50&93.0\%\\
&PMG~\cite{du2020fine}&ResNet50&93.4\%\\
&API-Net~\cite{zhang2020learning}&ResNet101&93.4\%\\
\midrule
\multirow{2}*{\tabincell{c}{Joint}}
&PART (Ours)&ResNet50&\textbf{94.4\%}\\
&PART (Ours)&ResNet101&\textbf{94.6\%}\\
\bottomrule
\end{tabular}
}
\end{table}

\textbf{Comparison on FGVC-Aircraft dataset.} Similar to the benchmarking on CUB-200-2011 dataset, here we make comparisons on sub-categories of aircrafts in~\tabref{table:Aircraft}. Recent approaches,~\eg, DCL~\cite{chen2019destruction} and Cross-X~\cite{luo2019cross}, achieve performance of $93.0\%$ and $92.7\%$, which is much higher than the previous work~\cite{lin2015bilinear}. To make fair comparisons with these works, we adopt the lightweight ResNet-50 backbone to achieve the state-of-the-art performance of 94.4\%, while previous works rely on deep backbones for feature extraction,~\eg,~\cite{gao2020channel} with ResNet-101 and~\cite{dubey2018pairwise,dubey2018maximum} with DenseNet-161. In addition, other works~\cite{fu2017look,zheng2017learning,yang2018learning} adopt the multi-crop inference to obtain multi-scale features, introducing additional computation burdens. Different from these mentioned issues, our proposed PART approach presents robust features with ResNet-50 backbone and one-crop inference.

\textbf{Comparison on Stanford-Cars dataset.}~\tabref{table:Cars} exhibits the results on the Stanford-Cars dataset. Stanford-Cars is a much easier dataset, in which previous methods achieve preliminary results of over 92.5\% accuracy. It can be found that methods on this dataset perform very similar performance,~\eg, TASN~\cite{zheng2019looking} and HBP~\cite{yu2018hierarchical} of 93.7\% performance. While obtaining multi-level feature representations~\cite{luo2019cross} and using separate layer initialization strategies~\cite{wang2018learning}, high performance as 94.5\% can be achieved. Surprisingly, even on this dataset, our method can provide a steady improvement compared to state-of-the-art results, reaching 95.1\% performance. With deep backbones~\ie, ResNet101, the performance shows a slight improvement to 95.3\% in top-1 accuracy.

\begin{table}[t]
\centering{
\caption{Comparison with state-of-the-art results on Stanford-Cars dataset. $\dag$: additional bounding box or segmentation annotation.}
\label{table:Cars}
\renewcommand{\arraystretch}{1.2}
\begin{tabular}{c|c|c|c}
\toprule
Type&Method&Backbone& Accuracy \\
\midrule
\multirow{3}*{\tabincell{c}{Part\\Based}}
&PA-CNN$\dag$~\cite{krause2015fine}&VGG-19&92.8\%\\
&MA-CNN~\cite{zheng2017learning}&VGG-19&92.8\%\\
&NTSNet~\cite{yang2018learning}&ResNet50&93.9\%\\
\midrule
\multirow{15}*{\tabincell{c}{Feature\\Based}}
&Bilinear~\cite{lin2015bilinear}&VGG-16&91.3\%\\
&Grassmann~\cite{wei2018grassmann}&VGG-16&92.8\%\\
&PC~\cite{dubey2018pairwise}&DenseNet-161&92.9\%\\
&MaxEnt~\cite{dubey2018maximum}&DenseNet-161&93.0\%\\
&MAMC~\cite{sun2018multi}&ResNet101&93.0\%\\
&HBP~\cite{yu2018hierarchical}&VGG-16&93.7\%\\
&TASN~\cite{zheng2019looking}&ResNet50&93.7\%\\
&DFL-CNN~\cite{wang2018learning}&ResNet50&93.8\%\\
&CIN~\cite{gao2020channel}&ResNet50&94.1\%\\
&CIN~\cite{gao2020channel}&ResNet101&94.5\%\\
&Cross-X~\cite{luo2019cross}&ResNet50&94.5\%\\
&DCL~\cite{chen2019destruction}&ResNet50&94.5\%\\
&API-Net~\cite{zhang2020learning}&ResNet50&94.8\%\\
&API-Net~\cite{zhang2020learning}&ResNet101&95.1\%\\
&PMG~\cite{du2020fine}&ResNet50&\textbf{95.1\%}\\
\midrule
\multirow{2}*{\tabincell{c}{Joint}}
&PART (Ours)&ResNet50&\textbf{95.1\%}\\
&PART (Ours)&ResNet101&\textbf{95.3\%}\\
\bottomrule
\end{tabular}
}
\end{table}

\begin{figure*}
\begin{center}
\includegraphics[width=1\textwidth]{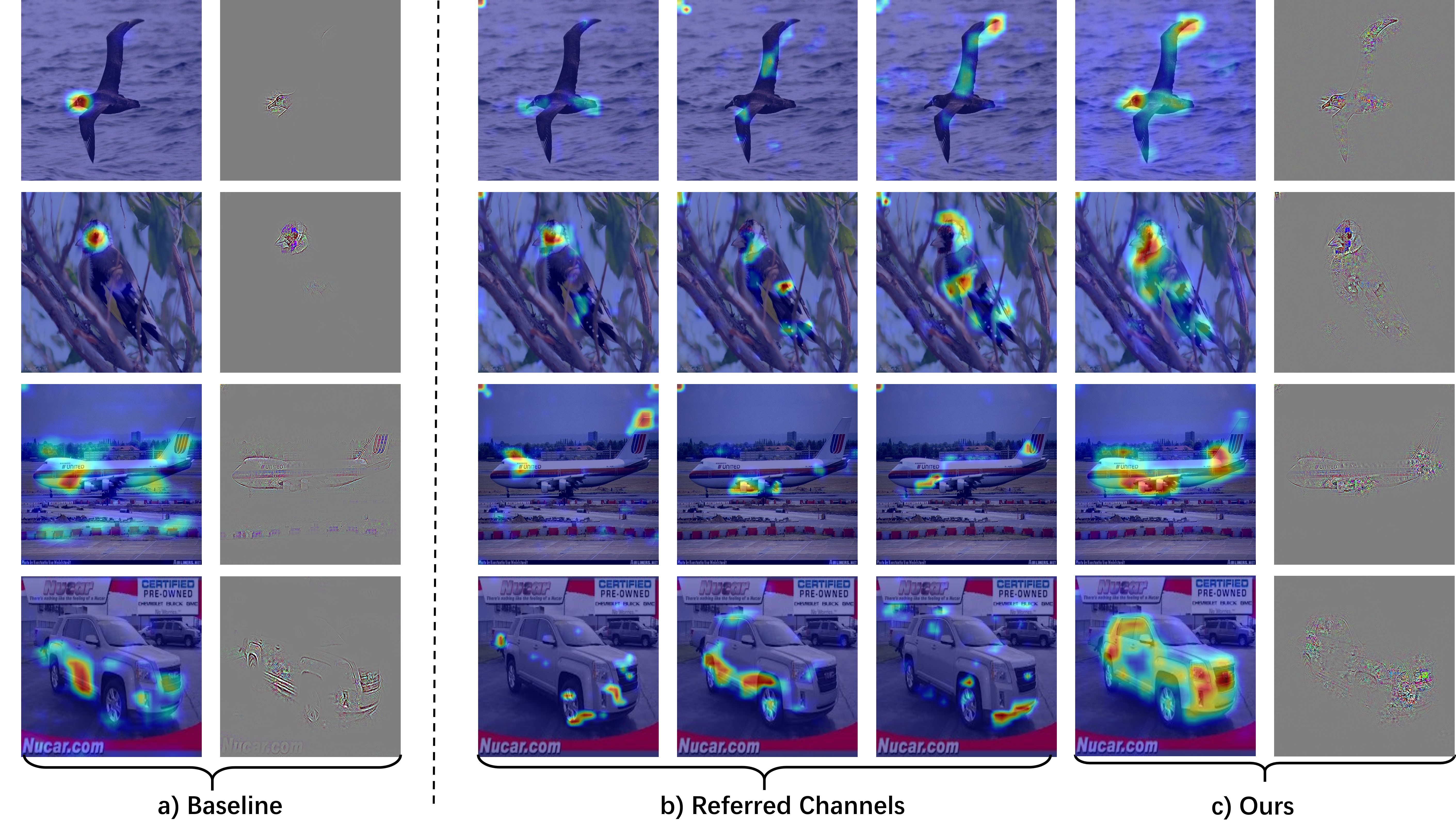}
 \caption{Class activation visualizations of baseline (ResNet50) and our proposed PART. Left figures in a) and c) are the weighted summed class activation map of all channels. right figures in a) and c) are the guided gradient of baseline and our model generated by Grad-CAMs~\cite{selvaraju2017grad}. Images in b) are referred channels generated by our proposed PART.}
 \label{fig:experiment}
 \end{center}
\end{figure*}

\subsection{Performance Analysis}\label{sec:expperformance}

In this subsection, we first conduct ablation studies of our proposed different modules, and study the manner of positional encoding in relation to transformation modules. Then we make a detailed analysis of the part effects in our framework and finally make explanations using visualized comparisons.

\begin{table}[t]
\centering{
\caption{Ablation studies of our different components on three benchmarks. $\mc{M}_{Relation}$, $\mc{M}_{Part}$ denotes the proposed relation transformation module and part-discovery module respectively. Quaternion $(x,y_1,y_2,y_3)$ indicates the head number of global transformer is $x$ and heads of three parts are $y_1,y_2$ and $y_3$.}
\label{table:ablation}
\setlength{\tabcolsep}{1.2mm}
\renewcommand{\arraystretch}{1.2}
\begin{tabular}{cc|c|c|c}
\toprule
$\mc{M}_{Relation}$&$\mc{M}_{Part}$  & CUB-200-2011& FGVC-Aircraft& Stanford-Cars \\
\midrule
N/A&N/A&85.4\%&90.7\%&93.4\%\\
(2,0,0,0)&N/A&88.2\%&93.5\%&94.7\%\\
(2,1,1,1)&$\times 3$&89.2\%&94.2\%&\textbf{95.1\%}\\
(4,1,1,1)&$\times 3$&89.3\%&94.3\%&\textbf{95.1\%}\\
(4,1,1,1)&$\times 4$&\textbf{89.6\%}&\textbf{94.4\%}&\textbf{95.1\%}\\
\bottomrule
\end{tabular}
}
\end{table}

\textbf{Ablation studies on different components.} To evaluate the effectiveness of our proposed part discovery module and relation transformation module, we conduct detailed ablation studies on three representative benchmarks~\cite{wah2011caltech,krause20133d,maji2013fine}. In~\tabref{table:ablation}, the first line shows the baseline performance on these three benchmarks, using the same data augmentation and training hyper-parameters as our final model. It can be found that our baseline model surpasses many earlier works and our proposed modules,~\ie, relation transformation module $\mc{M}_{Relation}$ and part discovery module $\mc{M}_{Part}$ can improve the recognition ability steadily,~\eg, from 85.4\% to 89.6\% (4 parts) on CUB-200-2011 dataset.
In the second row, we only adopt the relation transformation module with stacked manners, and set the head number of multi-head attention as 2. It can be found that the performance has increased notably by incorporating the part-level features. This verifies that the effectiveness of our proposed module, even on the high-performance baseline. Moreover, when setting the number of attention heads as 4, performance on these three datasets can be slightly improved.

\begin{table}[t]
\centering{
\caption{Performance analysis of feature extraction and feature encoding manner on CUB-200-2011 benchmark. $\mc{M}_{Relation}$, $\mc{M}_{Part}$ denotes the proposed relation transformation module and part-discovery module respectively.}
\label{table:encoding}
\renewcommand{\arraystretch}{1.2}
\begin{tabular}{lc|c}
\toprule
Feature Extraction& Encoding Manner &Accuracy\\
\midrule
Baseline (w/o group sampler)&N/A&84.2\%\\
Baseline&N/A&85.4\%\\
$\mc{M}_{Relation}$&N/A&87.5\%\\
$\mc{M}_{Relation}$&Absolute Encoding&88.1\%\\
$\mc{M}_{Relation}$&Learnable Encoding&88.2\%\\
$\mc{M}_{Relation}+\mc{M}_{Part}$&Learnable Encoding&89.2\%\\
\bottomrule
\end{tabular}
}
\end{table}

\textbf{Effects of feature encoding.} We first present the performance of the baseline method in the first two rows on the CUB benchmark dataset. The first row indicates that without the group sampler in~\secref{sec:expimplementaion} of a class-wise balanced manner, baseline model would result in a performance drop of over $1\%$.
As a natural drawback in building contextual dense relations, the spatial structure information is lost during the vectorization process. The result (with 2 attention heads) without positional information can be found in the third-row of~\tabref{table:encoding}. By incorporating the absolute positional encoding operation (sinusoid encoding function in~\cite{vaswani2017attention}), the recognition accuracy has been increased from 87.5\% to 88.1\%. In this paper, we adopt relative encoding which uses learned encoding parameters to encode the unique positional information for each pixel. The result can be found in the fifth row, which also has the potential to simulate any convolutional layers as in Theorem~\ref{thm1}. And finally our full model with the part discovery module achieves the best performance in the last row.

\begin{table}[t]
\centering{
\caption{Hyper parameter experiments on part discovery module on CUB-200-2011 benchmark. $|\br{X}^{*}|$ and $|\mc{P}|$ denotes the number of parts in the part stack and final selected parts respectively. $\mc{H}$ indicates the head number of transformers. }
\label{table:hyper}
\setlength{\tabcolsep}{1.3mm}
\renewcommand{\arraystretch}{1.1}
\begin{tabular}{c|c|c|c|c|c|c|c|c}
\toprule
$|\br{X}^{*}|$&N/A&64&64&64&32&64&64&64\\
$|\mc{P}|$&N/A&1&2&3&3&3&4&5\\
$\mc{H}$&2&2&2&2&2&4&4&4\\
\midrule
Acc.&88.2\%&88.8\%&89.0\%&89.2\%&89.1\%&89.3\%&89.6\%&88.8\%\\
\bottomrule
\end{tabular}
}
\end{table}

\textbf{Effects of part discovery.} To evaluate the effectiveness of the part discovery module, here we exhibit different hyperparameters in the part discovery module in~\tabref{table:hyper}. In our proposed algorithm, the part selection is mainly affected by two hyper-parameters,~\ie, the capacity of the part stack
$|\br{X}^{*}|$ and the final selected part number $|\mc{P}|$. The selected part number also affects the network architecture, deciding the number of regularized part-level branches. When applying no auxiliary part branches, in~\tabref{table:hyper}, our proposed approach with only transformer modules achieves the performance of 88.2\%. With the increase of selected part number $|\mc{P}|$, auxiliary supervision is also automatically added into the overall framework. It can be found that our approach with more parts ($|\mc{P}|=4$) reaches a high performance of 89.6\%. This indicates the effectiveness of using auxiliary part branches as regularization. However, with the part numbers $|\mc{P}|$ increasing to be 5, the final performance decreases to 88.8\%. Discovering more parts would incorporate meaningless background regions into computation and also restrain the learning of global features, which jointly lead to an inferior response on class activation maps.
Moreover, when squeezing the capacity of the part stack $|\br{X}^{*}|$, the part discovery module would also lead to similar performance, which verifies the robustness of our proposed method.

During the inference phase, we only rely on the global branches and drop the redundant part features. We also conduct experiments of direct concatenating these branches rather than using them as regularization. The performance of condition $|\mc{P}|=1$ in the first row of~\tabref{table:hyper} drops to 88.5\%.
Assuming the same classification task of 200 classes, the overall parameters of ResNet50 is 25.6M. Comparing to the full model with all 4 parts of 32.9M, our final model only requires 28.7M parameters for learning, which is implementation-friendly.

\textbf{Visualized explanations.} We further investigate the class activation maps using state-of-the-art Grad-CAMs~\cite{selvaraju2017grad}. As shown in~\figref{fig:experiment}, we present the summation of all channels of baseline model (ResNet50) in~\figref{fig:experiment} a), and our proposed PART in c). Interestingly, the baseline model can easily fall into local attentions to distinguish the visually similar samples, which leads to inferior generalization capabilities. While our proposed PART can localize the full object, resulting in a robust recognition ability. Moreover, we visualize the referred channels of our model in~\figref{fig:experiment} b). It can be found that different channels focus on different semantic parts, forming the final robust representations. Although these ``parts" do not strictly meet the definition of natural semantics, while still responding to meaningful object regions.

\section{Conclusions}\label{sec:conclusion}
In this paper, we propose a novel PArt-guided Relational Transformers (PART) framework for fine-grained recognition tasks. To solve the deficiencies in building long-term relationships in conventional CNNs, we make the first attempt to introduce the Transformer architecture into fine-grained visual recognition tasks. Beyond these insights, we further present a new part discovery module to automatically mine discriminative regions by utilizing the class activation maps during the training process. With these generated part regions, we propose a part-guided transformation module to learn high-order spatial relationships among semantic pixels. Our full model is composed of several local regional transformers and one global transformer, enhancing the discriminative regions while maintaining the contextual one. Experimental results verify the effectiveness of our proposed modules and our proposed PART reaches new state-of-the-art on 3 widely-used fine-grained recognition benchmarks.




%

\section*{Acknowledgment}
This work was supported by grants from National Natural Science Foundation of China (No.61922006 and No.61825101).

\ifCLASSOPTIONcaptionsoff
  \newpage
\fi


\bibliographystyle{IEEEtran}
\bibliography{FGVC}

\appendix[Proof of the Theorem~\ref{thm1}] \label{sec:appendix}
This Appendix provides a brief proof of theorem~\ref{thm1}, which is derived from previous research~\cite{cordonnier2020relationship} and not the main contribution of our paper.

\begin{thm}
A transformer layer with $m$ heads and with hidden dimension $C_h$ and output dimension $C_{out}$ can simulate convolutional neural networks with $\min(C_h, C_{out})$ channels.
\end{thm}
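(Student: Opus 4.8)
The plan is to show that the multi-head self-attention layer is expressive enough to reproduce, for every query position, the exact output of a convolution with a $K \times K$ kernel, by forcing each of the $m = K^2$ heads to act as a single ``tap'' of that kernel. First I would write the per-position output of the layer from Eqn.~\eqref{eq:onehead} and Eqn.~\eqref{eq:multihead}: each head $h$ produces a softmax-weighted average of value-projected features $\br{X}\br{W}_{val}$, and the $m$ heads are concatenated and mixed by $\br{W}^o\in\mathbb{R}^{mC_h\times C}$. In parallel I would write an ordinary convolution at query $\mathbf{q}$ as a sum $\sum_{\bm{\delta}}\br{X}_{\mathbf{q}+\bm{\delta}}\br{W}^{(\bm{\delta})}$ over the $K^2$ kernel offsets $\bm{\delta}$. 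The reduction is then immediate: if head $h$ could place all of its attention mass on the single neighbour $\mathbf{q}+\bm{\delta}_h$, its output would be exactly $\br{X}_{\mathbf{q}+\bm{\delta}_h}\br{W}_{val}^{(h)}$, and folding the corresponding block of $\br{W}^o$ into the value projection lets the concatenation-plus-mixing reproduce $\sum_h \br{X}_{\mathbf{q}+\bm{\delta}_h}\br{W}^{(h)}$, i.e. a convolution whose $K^2$ filter taps are the per-head matrices.

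The core step is therefore to realize, for each head, an attention distribution that is a Dirac mass at a prescribed relative offset. Here I would exploit the relative positional encoding of Eqn.~\eqref{eq:relative}: of its four terms, the first three depend on the content features $\br{X}_{i,:}$ or $\br{X}_{j,:}$, while the last, $\br{v}^{\top}\widehat{\br{W}}_{key}\br{R}_{i-j}$, depends only on the relative offset. Setting the content query projection $\br{W}_{qry}$ and the vector $\br{u}$ to zero annihilates the three content-dependent terms, leaving a score that is a function of $\bm{\delta}=i-j$ alone. I would then choose $\br{v}$, $\widehat{\br{W}}_{key}$ and the encoding $\br{R}$ so that this score equals a negative shifted quadratic, $-\alpha\|\bm{\delta}-\bm{\delta}_h\|^2$ up to an offset-independent constant, which is uniquely maximized at $\bm{\delta}=\bm{\delta}_h$; since $\br{R}_{i-j}$ supplies components spanning $1$, the two coordinates of $\bm{\delta}$, and $\|\bm{\delta}\|^2$, such a quadratic is realizable as a linear readout of $\br{R}_{i-j}$. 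Letting $\alpha\to\infty$, the softmax in Eqn.~\eqref{eq:onehead} concentrates all mass on $\bm{\delta}_h$, yielding the one-hot selection the reduction requires.

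Finally I would account for the channel count. The information transmitted by head $h$ is the product $\br{W}_{val}^{(h)}$ with the $h$-th row-block of $\br{W}^o$, a $C\times C_{out}$ matrix of rank at most $\min(C_h,C_{out})$. Hence each effective tap $\br{W}^{(h)}$ has rank at most $\min(C_h,C_{out})$, so the construction reproduces any convolution whose output carries $\min(C_h,C_{out})$ channels, which is exactly the claim.

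I expect the main obstacle to be the positional-encoding construction of the second paragraph: one must verify that the bilinear score built from $\br{R}_{i-j}$, $\widehat{\br{W}}_{key}$ and $\br{v}$ can indeed represent an arbitrary shifted quadratic in the two offset coordinates, and that the temperature limit gives an exact Dirac attention in the idealized setting. The remaining bookkeeping, matching the $K^2$ kernel offsets to heads and folding $\br{W}^o$ into the filter weights, is routine. Since this theorem is quoted from~\cite{cordonnier2020relationship} rather than established anew, I would present the argument at the level of this construction and refer to the original work for the detailed verification.
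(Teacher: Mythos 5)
Your proposal is correct and follows essentially the same route as the paper's appendix (itself abstracted from Cordonnier~\etal): annihilate the content-dependent terms of the relative-encoding score (your choice $\br{W}_{qry}=\br{u}=0$ is equivalent to the paper's $\br{W}_{qry}=\br{W}_{key}=0$), realize a shifted quadratic $-\alpha\|\bm{\delta}-\bm{\delta}_h\|^2$ as a linear readout of $\br{R}_{i-j}$ so that the softmax concentrates to a Dirac at one kernel offset per head as $\alpha\to\infty$, and fold the output projection into per-tap filter matrices. If anything, you spell out two steps the appendix elides --- the explicit matching of $m=K^2$ heads to kernel offsets and the rank argument $\mathrm{rank}(\br{W}^{(h)}_{val}\br{W}^o_{(h)})\le\min(C_h,C_{out})$ that actually produces the stated channel count.
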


\begin{proof}
A typical transformer layer $\mc{T}$ is composed of multi-head attention ($\bs{MHA}$) layers with $m$ heads and several feed forward layers (multi-layer perceptron). Considering Eqn.~\eqref{eq:onehead} and Eqn.~\eqref{eq:multihead}, it can be formally presented as:
\begin{equation}\label{eq:proof1}
\begin{split}
\mc{T} &= ((\bs{MHA}(\br{X})\br{X} \br{W}_{val}+\br{b}^{o})\br{W}^{f}+\br{b}^{f} \\
&=(\bs{Concat}_{h=1}^{m} (\sum_{k} \varphi (\br{A}^{(h)})_{k} \br{X}_{k}) \br{W}^{h}_{val}+\br{b}^{o})\br{W}^{f}+\br{b}^{f},
\end{split}
\end{equation}
where $\varphi$ denotes the $\bs{softmax}$ operation and $\br{W}^{f},\br{b}^{f}$ are learnable weights for feed-forward layers. In this equation, the learnable weights $\br{W}^{h}_{val}$ and $\br{W}^{f}$ can be formulated into one unified projection of $\br{W}^{h}_{proj}$.
In this manner, Eqn.~\eqref{eq:proof1} can be further reformulated as:
\begin{equation}\label{eq:proof2}
\mc{T} =\bs{Concat}_{h=1}^{m} (\sum_{k} \varphi (\br{A}^{(h)})_{k} \br{X}_{k}) \br{W}^{h}_{proj}+\br{b},
\end{equation}
Considering a convolutional layer $\bs{Conv}(\br{X})=\br{X}_{k} \br{W}+\br{b}$, the transformer layer would degenerated to a convolutional layer, when the attention value $ \varphi (\br{A}^{(h)})_{k} $ equals to $1$, when the positional biases $\Delta_{\{1,2\}}$ of convolutional kernel centers equals to the attention biases $k-q$. Considering the relative encoding in the main manuscript:
\begin{equation}\label{eq:proof3}
\begin{split}
\br{A}_{k,q}&= \br{X}_{k,:}^{\top} \br{W}_{qry}^{\top} \br{W}_{key}\br{X}_{q,:}+ \br{X}_{k,:}^{\top}\br{W}_{qry}^{\top} \widehat{\br{W}}_{key}\br{R}_{k-q}\\
& + \br{u}^{\top}\br{W}_{key}^{\top}\br{X}_{q,:}+ \br{v}^{\top}\widehat{\br{W}}_{key}\br{R}_{k-q},
\end{split}
\end{equation}

As proved by Cordonnier~\etal~\cite{cordonnier2020relationship}, there exists an attention matrix $ \varphi (\br{A}^{(h)})_{k} $ with relative encoding to be $1$ if the bias in convolutional kernels equals to $k-q$, otherwise $ \varphi (\br{A}^{(h)})_{k} =0$. To construct this, we set $\br{W}_{key} = \br{W}_{qry} = 0$, thus Eqn.~\eqref{eq:proof3} only has the last term of $\br{v}^{\top}\widehat{\br{W}}_{key}\br{R}_{k-q}$.

In the above expression, we set $\widehat{\br{W}}_{key} = \br{I}$ and assume $(\br{A}^{(h)}_{q})_{k}=\br{v}^{\top}\br{R}_{k-q}=-\alpha||k-q-\Delta||^2+\alpha c$, the softmax operation $ \varphi (\br{A}^{(h)}_{q})_{k} $ yields $1$. $\alpha$ and $c$ denotes the coefficient and constant respectively.
Note that $\Delta_{\{1,2\}} \in \mathbb{Z}^2$ denote the biases of x-axis and y-axis direction.

With these values fixed, the limitation of softmax value $\varphi (\br{A}^{(h)})_{k} $ could reach $1$. On the other conditions ($k-q \ne \Delta$), its limitation would be zero. Hence theorem~\ref{thm1} can be concluded.
\end{proof}

\balance
\end{document}